\begin{document}
\title{Discovering Locally Maximal Bipartite Subgraphs}
\author{Dominik Dürrschnabel\inst{1,2}\orcidID{0000-0002-0855-4185} \and
  Tom Hanika\inst{1,2}\orcidID{0000-0002-4918-6374} \and
  Gerd Stumme\inst{1,2}\orcidID{0000-0002-0570-7908}}
\authorrunning{D. Dürrschnabel et al.}

\institute{Knowledge \& Data Engineering Group,
  University of Kassel, Kassel,  Germany\\
  \and
  Interdisciplinary Research Center for Information System Design, Kassel, Germany\\
  \email{\{duerrschnabel, hanika, stumme\}@cs.uni-kassel.de} }

\maketitle              %
\begin{abstract}

  Induced bipartite subgraphs of maximal vertex cardinality are an essential concept for the analysis of graphs.
  Yet, discovering them in large graphs is known to be computationally hard.
  Therefore, we consider in this work a weaker notion of this problem, where we discard the maximality constraint in favor of inclusion maximality.
  Thus, we aim to discover locally maximal bipartite subgraphs.
  For this, we present three heuristic approaches to extract such subgraphs and compare their results to the solutions of the global problem.
  For the latter, we employ the algorithmic strength of fast SAT-solvers.
  Our three proposed heuristics are based on a greedy strategy, a simulated annealing approach, and a genetic algorithm, respectively.
  We evaluate all four algorithms with respect to their time requirement and the vertex cardinality of the discovered bipartite subgraphs on several benchmark datasets.
  
  \keywords{Induced Bipartite Subgraphs \and Node-deletion Problem \and \\Greedy Algorithm \and Simulated Annealing \and Genetic Algorithm.}
\end{abstract}

\section{Introduction}

Graphs are used in a variety of sciences to model and to analyze complex relationships.
In this context, the search for interesting and relevant substructures is a standard procedure.
So-called bipartite graphs represent a particularly interesting substructure, since they allow a multitude of further mathematically intrinsic as well as scientifically extrinsic interpretations, e.g., communities in social networks or codewords in coding theory.
Yet, discovering the largest bipartite subgraph of a graph is a problem that is known to be of high computational complexity.
Formally, the problem that relates to our work can be stated as follows:
\emph{For a graph $G=(V,E)$, what is a subset of vertices $X\subset V$ such that the graph $G(X,E\cap (X\times X))$ is bipartite and the size of $X$ is maximal?}
From a result by Lewis and Yannakakis~\cite{Lewis.1980} follows that deciding for  $G$ whether such a set of cardinality $k$ exists is an $\NP$-complete task.
While the computational complexity of this problem is well-investigated, it is not yet explored how to efficiently discover large, yet not maximal, induced bipartite subgraphs of a given graph.%

The present work investigates a related problem where the global maximality condition of the computed subgraph is replaced with inclusion-maximality of the vertex set.
In this setting, we optimize on the order of the computed subgraph, \ie, we maximize the number of its vertices.
In order to make our algorithms suitable for real-world data, we limit ourselves to designs that are executable in polynomial time in the size of the data.

As a benchmark, we recollect an exact solution of the bipartite subgraph problem which we proposed in Dürrschnabel et al.~\cite{dimdraw}.
This solution shows a principal way how a corresponding SAT problem can be formulated and subsequently solved using a high-performance SAT-solver.
We then present three polynomial-time heuristics to compute \emph{locally maximal solutions}.
All three have different runtime-performance trade-offs which we point out.
In detail, we evaluate all four algorithms against each other on common benchmark graph datasets and compare their runtime and the qualities of the results, i.e., the cardinality of the computed induced bipartite subgraphs.
Our results show that the greedy heuristic has the best runtime, while the genetic algorithm performs best with respect to the order of the computed bipartite subgraph.
Finally, the employed simulated annealing algorithm balances runtime and performance.

We investigate this task because of our previous work~\cite{dimdraw} in the realm of order diagram drawing.
In this, the proposed solution requires the computation of a large inclusion-maximal bipartite subgraph in the so-called transitive incompatibility graph of the ordered set.
Other possible fields are datasets that are by their nature bipartite such as author-to-publication networks but are extracted from non-bipartite relations and thus have noise that makes them non-bipartite.
Also in the realm of system biology and medicine, bipartite graphs are of interest~\cite{Pavlopoulos.2018} and these areas could thus profit from our work.
Furthermore, the approach could be used to discover hidden two-mode networks in graph data.

\section{Globally Maximal Bipartite Subgraphs}
In this work, we refer to \emph{graphs} of \emph{order} $n$ as tuples $G=(V,E)$ with $|V| = n$ and $E \subset (V \times V)$ with \emph{vertex set}  $V$ and \emph{edge set} $E$.
For a subset of the vertices $X \subset V$ the graph $(X, E\cap (X \times X))$ is called an \emph{induced subgraph} of $G$ and denoted by $G[X]$. In this work we always refer to induced subgraphs when we say subgraph.
A graph is called \emph{bipartite}, if its vertices can be partitioned into two sets $A$ and $B$ with $V= A \cup B$ such that there is no edge $\{u,v\}$ with $u\in A \wedge v \in A$ or $u\in B \wedge v \in B$.
A well-known property of graphs is that they are bipartite if and only if they contain no cycles of odd length.
For a natural number $n$ we denote the set $\{i \in \N\mid 1 \leq i \leq n\}$ by $[n]$.
We only consider finite graphs, \ie, graphs of order $n < \infty$.

Even though we are interested in bipartite subgraphs of maximal order, we use the equivalent dual formulation from here on.
For a graph $G=(V,E)$, we want to find a minimal set of vertices $D$ such that the graph $G[V \setminus D]$ is bipartite.
This is, as $D$ can be interpreted as erroneous data that makes the graph non-bipartite. %

In this section we repeat and expand on an approach that we already briefly suggested in \cite{dimdraw}, to approaches the global problem formulation, \ie, the problem:

\begin{problem}
\label{prob1}
For a graph $G=(V,E)$, what is a subset of vertices $D \subset V$ such that the graph $G[V\setminus D]$ is bipartite and the size of $D$ is minimal.
\end{problem}

Because of the focus of this work, we are able to properly evaluate this approach against the later proposed heuristics.
Checking for a graph if it is bipartite can be done in polynomial time by doing a breath-first search and coloring the vertices in alternating colors conditional on their distance to the starting vertex.
Either the graph is bipartite, and the two color classes will result in a valid bipartition or the algorithm will try to assign some vertex to both color classes.
A naive approach could thus check for all subset of vertices, whether their deletion makes the graph bipartite.
However, even for small examples, \ie, graphs with few vertices, this is infeasible as checking all subset of cardinality $k$ of a graph of order $n$ will result in $\binom{n}{k}$ such tests.
For a more sophisticated approach to the problem, we reduce it to an instance of the Boolean satisfiability problem which we can then solve with a SAT-Solver, in our case MiniSat~\cite{Een.2003} in version 2.2.
To be exact, we want to know for a graph $G=(V,E)$ on $n$ vertices and $m$ edges whether by deleting $k$ vertices we can make the graph bipartite.
Solving is done by finding a partition of $V$ into the three sets $A,B,D$, such that $A$ and $B$ are independent sets and $|D|\leq k$.

\begin{definition}
  \label{CNF}
  Let $G=(V,E)$ be a graph on $n$ vertices. For each $v_i\in V$ define the three variables $V_{i,1},V_{i,2},V_{i,3}$ with the clauses $(i) V_{i,1} \vee V_{i,2} \vee V_{i,3}$ for all $v_i \in V$
  and $(ii)$ $\neg V_{i,1} \vee \neg V_{j,1}$ and $\neg V_{i,2} \vee \neg V_{j,2}$ for all $\{v_i,v_j\} \in E$.
  Furthermore, add variables and clauses such the set $\{V_{i,3} \mid i \in [n]\}$ satisfies an at-most-$k$ condition.
\end{definition}

For the \emph{at-most-k} condition, we make use of the variables and clauses introduced by Sinz~\cite{Sinz.2005}.
Altogether, our SAT instance has $(n-1)(k+3)+3$ variables and $2m+2nk+2n-3k-1$ clauses. For this CNF the following holds. %

\begin{proposition}
  The CNF from \cref{CNF} of a graph $G=(V,E)$ has a valid assignment, iff the graph can be made bipartite by deleting $k$ vertices.
  Then, for $D=\{v_i \mid V_{i,3}=T\}$, $A=\{v_i \mid V_{i,1}=T\}$ and $B=\{v_i \mid V_{i,2}=T \vee V_{i,1}=F\}$ the graph $G[V\setminus D]$ is bipartite with bipartition  classes $A$ and $B$.
\end{proposition}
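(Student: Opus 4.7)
The plan is to prove the two directions of the biconditional separately, and then verify that the explicitly constructed sets $D$, $A$, $B$ indeed witness the bipartition.

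For the forward direction, assume the CNF has a satisfying assignment. First I would use the correctness of Sinz's at-most-$k$ encoding (cited in the paper) to conclude that $|D|=|\{v_i\mid V_{i,3}=T\}|\le k$. Next I would check that every vertex of $V\setminus D$ lies in $A\cup B$: for such $v_i$ we have $V_{i,3}=F$, so clause $(i)$ forces $V_{i,1}=T$ or $V_{i,2}=T$, placing $v_i$ in $A$ or $B$ respectively (in the latter case via the first disjunct of $B$'s definition). Then I would show $A$ is independent in $G[V\setminus D]$: for any edge $\{v_i,v_j\}\in E$, clause $\neg V_{i,1}\vee\neg V_{j,1}$ prevents both endpoints from being in $A$. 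For $B$ to be independent the argument is slightly more delicate; here lies the main subtlety.

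The subtle point is that $B$ is defined as $\{v_i\mid V_{i,2}=T\vee V_{i,1}=F\}$, which a priori need not coincide with $\{v_i\mid V_{i,2}=T\}$. The key observation is that once we restrict to $V\setminus D$, i.e.\ to $v_i$ with $V_{i,3}=F$, clause $(i)$ forces $V_{i,1}=T$ or $V_{i,2}=T$; equivalently, $V_{i,1}=F$ implies $V_{i,2}=T$. Consequently, \emph{inside} $V\setminus D$ the two descriptions agree, so $v_i\in B$ if and only if $V_{i,2}=T$. From this, for any edge $\{v_i,v_j\}$ in $G[V\setminus D]$, the clause $\neg V_{i,2}\vee\neg V_{j,2}$ prevents both endpoints from lying in $B$, which establishes independence. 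Together with $A\cup B\supseteq V\setminus D$, this proves that $G[V\setminus D]$ is bipartite with classes $A$ and $B$ (a vertex possibly appearing in both classes may be assigned arbitrarily).

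For the backward direction, assume $G[V\setminus D]$ is bipartite with a bipartition $A\mathbin{\dot\cup} B$ of $V\setminus D$ and $|D|\le k$. Define $V_{i,1}=T$ iff $v_i\in A$, $V_{i,2}=T$ iff $v_i\in B$, and $V_{i,3}=T$ iff $v_i\in D$. Clause $(i)$ holds because $A,B,D$ cover $V$. Clauses of type $(ii)$ hold because $A$ and $B$ are independent sets in $G[V\setminus D]$, so no edge has both endpoints in $A$ or both in $B$ (edges incident to $D$ are irrelevant since then at least one of the two literals is true). Finally, $|\{i\mid V_{i,3}=T\}|=|D|\le k$, so the at-most-$k$ auxiliary variables from Sinz's construction can be set consistently. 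This yields a satisfying assignment and completes the proof. The main obstacle is precisely the harmonization of the two descriptions of $B$ in the forward direction; all remaining steps are direct verifications of clauses.
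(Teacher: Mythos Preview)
Your proof is correct and follows essentially the same approach as the paper's own argument: both directions are handled directly, with clause~$(i)$ providing coverage and clauses~$(ii)$ providing independence. In fact your treatment of the forward direction is more careful than the paper's, since you explicitly reconcile the two descriptions of~$B$ inside $V\setminus D$ (the paper simply argues ``without loss of generality~$A$'' and invokes condition~$(ii)$ without unpacking why this also works for~$B$), and you also note that a vertex may lie in both classes, whereas the paper asserts ``exactly one''.
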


\begin{proof}
  Let the conjugative normal form have a valid assignment.
  Then the set $D$ has cardinality at most $k$ by construction.
  Note, that every vertex that is not in $D$ has to be in exactly one of $A$ or $B$ because of their definition and condition $(i)$ of the conjugative normal form.
  Assume $A$ and $B$ are not bipartition classes of $G[V \setminus D]$.
  Then there have to be two vertices in either $A$ or $B$ that are connected by an edge, without loss of generality $A$.
  But this is a contradiction to the definition of $A$ and condition $(ii)$ of the conjugative normal form.
  Assume now on the other hand, that there is a set $\tilde{D}$ of cardinality at most $k$, such that $G[V \setminus D]$ is bipartite.
  Call the bipartition sets $\tilde{A}$ and $\tilde{B}$.
  But then the variable assignment $V_{i,1}=T \iff v_i \in \tilde{A}$ and $V_{i,2}=T \iff v_i \in \tilde{B}$ and $V_{i,3}=T \iff v_i \in \tilde{D}$ is a valid assignment of the CNF.
\end{proof}

Now we can build this SAT instance for each $k$ increasing from 1 until we achieve satisfiable instance.
Then, the set $\{v_i \mid V_{i,3}=T \}$ is exactly the subset of vertices we have to remove to make the graph bipartite.
As a speed-up technique we apply a binary search where at the beginning $k$ is doubled in each step until an initial solution is found.
Then, in each step the mean value of the known upper and lower bounds is checked until the exact solution is found.

\section{Heuristics for the Local Problem}

In this section we now deal with the local version of the problem. Formally:

\begin{problem}
For a graph $G=(V,E)$, what is a set of vertices $D \subset V$ such that the graph $G[V\setminus D]$ is bipartite and there is no set $\tilde{D}\subsetneq D$ with the same property.
\end{problem}

Compared to Problem~\ref{prob1}, this problem does not search for a set $D$ of global minimality and restricts it to only inclusion-minimality.

\subsection{Greedy}
\label{sec:greedy}

Our greedy algorithm is formalized in \cref{alg:greedy}, which reflects the main routine \texttt{greedy} and the subroutine \texttt{greedy-fill}.
\begin{algorithm}[t]
  \caption{Greedy Algorithm}
  \label{alg:greedy}
  \textbf{Input:} Graph $G=(V,E)$\\
  \textbf{Output:} Inclusion-minimal set $D \subset V$, such that $G[V \setminus D]$ is bipartite\\
  \phantom{\textbf{Output:}} Bipartition classes $A$ and $B$ of $G[V \setminus D]$
  \hrule
  \begin{lstlisting}
def greedy$(V,E)$:
    return greedy-fill$(V,E,\{\},\{\},V)$

def greedy-fill$(V,E,A,B,D)$:
    for $u$ in random-order$(D)$:
        $\mathcal{N} = \{N \in \{A,B\}\mid \nexists v \in N, \{u,v\} \in E\}$
        if $\mathcal{N} \neq \emptyset$:
            $D.$remove$(u)$
            random-element$(\mathcal{N}).$add($u$)
    return $A,B,D$
  \end{lstlisting}
\end{algorithm}
The subroutine has to be called with the vertex set $V$ partitioned into $A$, $B$, and $D$.
It is required that neither $A$ nor $B$ contain two vertices connected by an edge.
The algorithm checks for all vertices $u$ in $D$ in a random order, whether they already have a neighbor in one of the bipartition classes $A$ or $B$.
Then, $\mathcal{N}$ contains the classes to which $u$ can be added based on the previous assignments.
If $\mathcal{N}$ is empty, it can't be added to any bipartition class and is therefore stays in $D$, otherwise one of the classes in $\mathcal{N}$ is selected at random.
To avoid ambiguity, if $\mathcal{N}$ only contains the empty set we add $u$ to $A$.
Thus, the routine moves elements from $D$ to $A$ and $B$, until all elements in $D$ are connected with an edge to some element in $A$ and in $B$.

To compute an inclusion-minimal set $D$, such that the induced graph on $V\setminus D$ is bipartite, we can now call this subroutine with $V$ as $D$ and empty sets for $A$ and $B$.
The algorithm then computes $D$ and the bipartition classes $A$ and $B$ of the resulting graph.
By this design, we can rely on this subroutine for the heuristics later proposed in this work.

\subsection{Simulated Annealing}

Simulated annealing algorithms are motivated by the physical process of annealing metal which involves controlled cooling in order to achieve better physical properties.
Classical hill-climbing algorithms generate a starting solution which is then improved iteratively by choosing neighboring solutions of better quality.
By design, these algorithms often terminate in a minimum which is only local.
The simulated annealing approach tries to overcome this issue by accepting to worsen the current solution using a probability function which thus allows the algorithm to leave local minima.
At the beginning, the algorithm accepts worse solutions with a high probability and then decreases this probability iteratively towards zero based on a cooling function.
Thus, in the last iterations, the algorithm behaves similar to a hill-climbing version.

\begin{algorithm}[t]
  \caption{Simulated Annealing}
  \label{alg:simulatedannealing}
  \begin{tabular}{lll}
    \textbf{Input:} & Graph $G=(V,E)$                             & Maximal number of iterations $i_{max}$                            \\
                    & Starting temperature $t_{max}$ \hspace{1em} & Cooling function $cooling(i_{max}, t_{max},i)\mapsto \R_{\geq 0}$ \\
  \end{tabular}\\
  \textbf{Output:} Inclusion-minimal set $D \subset V$, such that $G[D]$ is bipartite\\
  \phantom{\textbf{Output:}} Bipartition classes $A$ and $B$ of $G[D]$
  \hrule
  \begin{lstlisting}
def simulated-annealing$(V,E,i_{max})$:
    $A,B,D\coloneq$greedy$(V,E)$
    for $i$ in $[i_{max}]$:
        $A_{c},B_{c},D_{c} \coloneq $compute-neighbor$(V,E,A,B,D)$
        $c\coloneq|D|-|D_{c}|$
        if $c>0$:
            $A,B,D \coloneq A_c,B_c,D_c$
        else:
            $t \coloneq$cooling$(i_{max}, t_{max},i)$
            if $e^{c/t}>$random$(0,1)$:
                $A,B,D \coloneq A_c,B_c,D_c$
    return $A,B,D$

def compute-neighbor$(V,E,A,B,D)$:
    $u \coloneq$random-element$(D)$
    for $v$ in neighbors$(u)$:
        $D.$add$(v)$
    $D.$remove$(u)$
    random-element$(\{A,B\}).$add$(u)$
    return greedy-fill$(V,E,(A\setminus D),(B\setminus D),D)$
  \end{lstlisting}
\end{algorithm}

Our version of the simulated approach in \cref{alg:simulatedannealing} is initialized with a maximal number of iterations $i_{max}$, a starting temperature $t_{max}$ and a cooling function which maps to a range from $0$ to $t_{max}$.
The initial solution is generated using the greedy algorithm.
In each iteration, a neighboring solution is chosen by selecting a random vertex $u$ from $D$ and removing all vertices that are connected by an edge to $u$ from the bipartition classes $A$ and $B$.
Then, $u$ can be added to $A$ or $B$, from which one is chosen at random.
Finally, the \texttt{greedy-fill} routine is used to ensure that the set $D$ is minimal.
Note, that using a cooling function that maps every value to zero results in a hill-climbing algorithm.

\subsection{Genetic Algorithm}

Finally, we propose an adaptation of a genetic algorithm which is motivated by the evolutionary process of reproduction.
For this a set of starting individuals is chosen.
The ones, with the highest fitness, with respect to the optimization task, are allowed to reproduce to generate a new generation of individuals.
Furthermore, mutations can be introduced to avoid local minima.
This process is repeated, until a good solution is found.

\begin{algorithm}[t]
  \caption{Genetic Algorithm}
  \label{alg:genetic}
  \begin{tabular}{lll}
    \textbf{Input:} & Graph $G=(V,E)$                                   & Maximal number $i_{max}$ of individuals \\
                    & Probability $p_{mut}$ for mutations \hspace*{1em} & Maximal number $g_{max}$ of generation  \\
  \end{tabular}\\
  \textbf{Output:} Inclusion-minimal set $D \subset V$, such that $G[D]$ is bipartite\\
  \phantom{\textbf{Output:}} Bipartition classes $A$ and $B$ of $G[D]$
  \hrule
  \begin{lstlisting}
def genetic$(V,E,g_{max}, i_{max})$:
    $cur\_gen\coloneq [\,]$
    for $i$ in $[i_{max}]$:
        $cur\_gen.$add$($greedy-fill$(V,E))$
    for $i$ in $[g_{max}]$:
        $next\_gen\coloneq [\,]$
        $P\coloneq$generate-probability-distribution$(cur\_gen)$
        for $i$ in $[i_{max}]$:
            $(A_1,B_1,D_1),(A_2,B_2,D_2) \coloneq$choose-with-prob$(cur\_gen,P, 2)$
            $next\_gen.$add$($breed$(V,E,A_1,B_1,D_1,D_2,p_{mut}))$
        $cur\_gen\coloneq next\_gen$
    return (A,B,D) from $cur\_gen$ with $|D|$ minimal

def breed$(V,E,A,B,D_1,D_2,p_{mut})$:
    $D_n \coloneq D_1 \cup D_2$
    if random$[0,1) < p_{mut}$:
        return compute-neighbor$(V,E,(A \setminus D_n),(B \setminus D_n),D_n)$
    else:
        return greedy-fill$(V,E,(A \setminus D_n), (B \setminus D_n), D_n)$
\end{lstlisting}
\end{algorithm}

In our version of this approach, which is formalized in \cref{alg:genetic}, we provide a maximal number of generations $g_{max}$, a number of individuals $i_{max}$ that exist in each generation and a mutation probability $p_{mut}$.
At the beginning we initialize $i_{max}$ starting individuals using the greedy algorithm from \cref{sec:greedy}.
In each iteration of the loop, a new generation $next\_gen$ of individuals is generated which replaces the current set of individuals $cur\_gen$ with a new one.
For this, a probability distribution based on the fitness of the current generation is computed.
We choose the probability distribution such that the probability is linear with respect to the cardinality of $D$ and that the element with the smallest cardinality $D$ has ten times the probability of the one with the largest cardinality $D$.
The elements of $next\_gen$ are generated by choosing two distinct elements for each based on the probability distribution and breeding them to a new individual.
This is done by using the \texttt{breed} function which first computes the set $D_n$ as the union of both $D$-sets from the chosen individuals.
Because bipartiteness is a hereditary property and $D_n$ is a superset of the $D$-sets, the graph $G[V\setminus D_n]$ is also bipartite.
Finally, the new individual is generated using the routine \texttt{greedy-fill} to ensure that the resulting $D$ is inclusion-minimal.
Alternatively, with probability $p_{mut}$, a mutation is introduced using the neighborhood choosing from the simulated annealing algorithm.

\section{Evaluation and Discussion}

In this section we discuss and compare the different approaches proposed in this work with respect to their runtime and the quality of their results.

\subsection{Datasets and Implementation}
\label{sec:datasets}

We test our algorithm on four different datasets.
The DAGmar\footnote{\url{https://www.infosun.fim.uni-passau.de/~chris/down/MIP-1202.pdf}} generator can produce random leveled graphs for given vertex numbers and densities.
We use the graphs bundled with the DAGmar generator as our first testing dataset.
Those have between 20 and 400 vertices and an edge-vertex ratio between 1.6 and 10.6.
We refer to this dataset as \emph{DAGmar}.
The \emph{Rome} and \emph{North} (AT\&T) graphs\footnote{\url{http://www.graphdrawing.org/data.html}} are two datasets that are well-known benchmark graphs amd are commonly applied by the graph drawing community.
The \emph{Random} class contains randomly generated graphs on vertices between 10 and 500 vertices. The edge-vertex ratio was chosen, such that it is between 1 and 10.

All of our experiments are implemented in Python 3, for the SAT-solver we used the binaries provided on the MiniSat website\footnote{\url{http://minisat.se/MiniSat.html}}.
The experiments are conducted on an Intel Xeon Gold 5122 CPU equipped with 300 GB RAM.
For reproducibility  and further research, our source code is public.\footnote{\url{https://github.com/domduerr/bipartite}}

\subsection{Parameter Tuning}
\label{sec:parameter-tuning}

Our simulated annealing approach and our genetic algorithm require parameters which we have to chosen carefully.
We provide recommendations based on extensive optimizations.
Due to space constraints, we are only able to describe those partially in this section.

\paragraph{Simulated Annealing}
\label{sec:simulated-annealing}

\begin{figure}[t]
  \begin{minipage}{0.6\linewidth}
    \captionof{table}{Different cooling functions for the simulated annealing algorithm.}
    \label{tab:siman_cooling}
    \begin{tabular}[t]{llr}
      \toprule
      Name           & $cooling(i_{m}, t_{m}, t)$                                                          & Avg. result \\
      \midrule
      Hill Climbing: & $0$                                                                                 & 20.50       \\
      Linear:        & $t_m\left(\frac{i_m-t}{i_m}\right)$                                                 & 19.87       \\
      Quadratic:     & $t_m\left(\frac{i_m-t}{i_m}\right)^2$                                               & 19.39       \\
      Exponential:   & $t_m\left(\frac{1}{1+e^{\frac{2\ln(t_m)}{i_m}\left(t-\frac{i_m}{2}\right)}}\right)$ & 19.91       \\
      \bottomrule
    \end{tabular}
  \end{minipage}%
  \begin{minipage}{0.35\linewidth}
    
    \includegraphics[width=.9\linewidth]{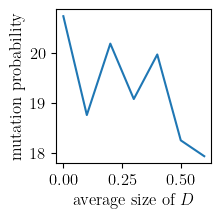}
    
    \caption{Average results for different mutation probabilities.}
    \label{fig:mut}
    
  \end{minipage}
\end{figure}

The simulated annealing algorithm has three parameters that to optimize.
For the starting temperature we observe that the average cardinality of $D$ for all graphs in the test sets decreases until a temperature of around 45.
Thus, we recommend using a starting temperature of 50.
To decide, on the cooling function we provide the average result of the tests on four different cooling functions in \cref{tab:siman_cooling}.
The results suggest, that the quadratic cooling function should be plugged.
Finally, the number of iterations that algorithm runs for can be optimized.
Our observations suggest that the improvements after more than 10000 iterations are marginal, and we thus recommend this parameter.

\paragraph{Genetic}
The genetic algorithm has three parameters.
As a general rule, we observe that increasing the number of individuals or the number of generations improves the result.
We are not able to observe a maximum number of individuals or generations, beyond which the cardinality of $D$ does not become smaller, we thus cap our algorithm at 20 individuals and 1000 generations.
For the mutation probability, \cref{fig:mut} suggests that a mutation rate of 1 provides the best results.
We also experimented with survivors between generations, however, we observed that those increases the size of $D$.

\subsection{Evaluation}
\label{sec:evaluation}

We first consider the theoretical runtime of the different algorithms.
Even though the transformation to the SAT instance is polynomial, the SAT-solver has by its nature an exponential runtime.
The greedy algorithm has to process every edge once and is thus linear in the number of edges of the graph.
In the worst case, the simulated annealing algorithm does a full procession of the greedy algorithm in each repetition and has thus a runtime of $O(m\cdot i_{max})$.
Finally, the runtime of the genetic algorithm is in $O(m\cdot g_{max}\cdot i_{max})$, as in every repetition of the loop the greedy algorithm is called for each individual.

\begin{table}[h]
  \centering
  \caption{Average result and standard deviation of the four algorithms on all graph classes. *The algorithm did not finish in under one hour for all graphs.}
  \label{tab:res}
  \begin{tabular}{lrrrr}
    \toprule
    Avg.~Result & Dagmar         & North       & Random         & Rome        \\
    \midrule
    SAT         & 18.12 (10.07)* & 2.02 (2.89) & 14.65 (10.39)* & 3.93 (2.36) \\
    Greedy      & 114.37 (72.34) & 2.05 (2.94) & 127.44 (91.11) & 5.84 (3.81) \\
    Sim.~Ann.   & 97.81 (64.43)  & 2.02 (2.89) & 108.75 (80.13) & 4.16 (2.52) \\
    Genetic     & 93.32 (61.91)  & 1.83 (7.46) & 103.44 (76.81) & 3.95 (2.38) \\
    \bottomrule
  \end{tabular}

  \caption{Average time consumption and standard deviation of the four algorithms on all graph classes.}
  \label{tab:time}
  \begin{tabular}{lrrrr}
    \toprule
    Avg.~Time & Dagmar             & North           & Random              & Rome            \\
    \midrule
    SAT       & 273.93s (858.47s)* & 0.23s (2.49s)   & 420.51s (1055.62s)* & 0.46s (2.41s)   \\
    Greedy    & 4.51ms (3.67ms)    & 0.26ms (0.17ms) & 4.77ms (3.72ms)     & 0.25ms (0.12ms) \\
    Sim.~Ann. & 9.39s (6.16s)      & 0.66s (0.68s)   & 9.54s (6.75s)       & 0.98s (0.3s)    \\
    Genetic   & 324.43s (217.8s)   & 35.88s (15.41s) & 306.97s (210.01s)   & 43.51s (7.16s)  \\
    \bottomrule
  \end{tabular}
\end{table}

For the experiment-based evaluation, we choose our parameters as described in the last section.
We fist consider the experimental runtime.
Each graph is processed in our test-datasets with each algorithm once.
If the SAT-algorithm ran longer than one hour, we canceled its computation.
For the DAGmar graph class, only the computation of 304 from 1960 graphs finished in this time, for the random class it was 89 of 403.
In \cref{tab:time,tab:res} we compare the average runtime and size $D$ of the different algorithms split by graph class.
If we compare the heuristics, it is as expected by the theoretical runtime investigation:
The greedy algorithm is the fastest one and the genetic algorithm is the slowest with the simulated annealing algorithm being in the middle.
When we consider the size of $D$, it is the other way round: The best results are achieved by the genetic algorithm and the worst ones by the greedy algorithm on all test datasets.
The SAT algorithm is, by its nature of being an exact algorithm, always the one with the smallest set $D$.
\begin{figure}[b]
  \centering
  \includegraphics[width=\textwidth]{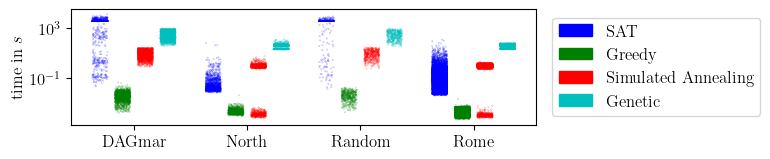}
  \caption{Time required by the different algorithms. Each dot represents a graph in the given class.}
  \label{fig:time}
\end{figure}
A more visual representation of the time requirement is provided in the plot of \cref{fig:quality}.
Here each graph in each graph class is represented by a dot which is placed on a logarithmic scale in the $y$-axis, depending on the time its computation needs for each algorithm.
In \cref{fig:quality} we plot for all graph classes and algorithms, how many graphs had a computed set $D$ that is larger than $k$.
To provide comparability, the graphs that did not finish in the SAT-experiment are excluded from this plot.
Once again, we can observe that the greedy algorithm performs worst, while the simulated annealing and genetic algorithm come very close to the SAT-algorithm which, by its nature of being an exact algorithm, is always the lowest curve in all four plots.

\begin{figure}[t]
  \centering
  \null\hfill\includegraphics[width=0.48\linewidth]{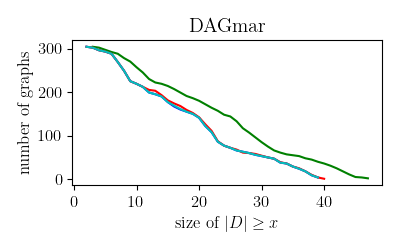}\hfill%
  \includegraphics[width=0.48\linewidth]{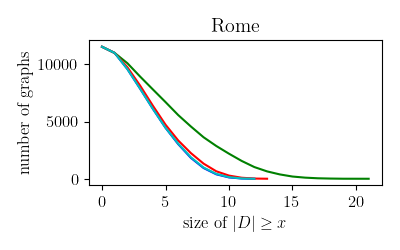}\hfill\null\\
  \null\hfill\includegraphics[width=0.48\linewidth]{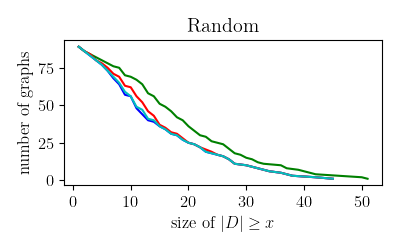}\hfill%
  \includegraphics[width=0.48\linewidth]{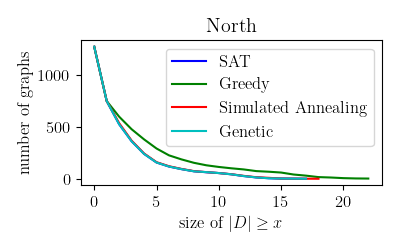}\hfill\null
  \caption{Number of graphs that where the size of the computed $D$ is at least $x$ for the different algorithms.}
  \label{fig:quality}
\end{figure}

\subsection{Discussion}
\label{sec:discussion}

We propose four different algorithms in this work.
Each of these algorithms has its merits, as they balance runtime against cardinality of $D$ in different ways.
An exact result is usually preferred, however we observe that the exact SAT-algorithm does not finish in under an hour for medium-sized and larger graphs.
Because of the exponential nature of SAT-solving, one should thus not expect to always find a solution in a reasonable time.
On the other hand, the greedy algorithm has a runtime which is linear in the number of edges of the graph and can thus usually be computed even on large graphs.
The order of the computed bipartite subgraphs are not too unreasonable when compared to the exact solution of the SAT algorithm.
Finally, the simulated annealing and greedy algorithm are in the middle of the other two approaches in both regards, runtime and quality.
The simulated annealing approach is a bit faster but delivers worse results.
However, it should be noted that both of them are very close to the exact solutions.
For a general recommendation, one should usually make use of the algorithm with the highest runtime one can afford.

\section{Related Work}

Bipartiteness of graphs is a so-called hereditary property, \ie, every subgraph of a bipartite graph is once again bipartite.
Lewis and Yannakakis~\cite{Lewis.1980} show in their work from 1980 that for any hereditary property $P$, it is $\NP$-complete to decide, whether a graph $G$ can be made to satisfy $P$ by deleting $k$ vertices.
Even approximations are known to be in the same complexity class~\cite{Lund.1993}.

Cohen et al.~\cite{Cohen.2008} provide an algorithm to compute all maximal induced subgraphs with a desired hereditary property and Trukhanov et al.~\cite{Trukhanov.2013} give a framework on how to design an algorithm to find maximum vertex subset~\cite{Trukhanov.2013} with given hereditary property.
Because of the complexity of the underlying decision problem, both approaches result in exponential algorithms.

A closely related problem is the edge-deletion problem.
Here, instead of vertices, a set of edges is requested that makes the graph bipartite if it is deleted.
Similarly to the vertex-deletion problem, deciding on the number of edges that have to be deleted is $\NP$-complete~\cite{Yannakakis.1981}.

Another well-investigated problem about subgraphs with a hereditary property is the clique-problem.
For a given graph, the largest fully-connected subgraph is sought-after.
Its $\NP$-completeness was first proven by Karp in 1972~\cite{Karp.1972}.
To tackle this problem, there are exact algorithms~\cite{Bron.1973} as well as heuristics \cite{Wu.2015}.

Deciding if an instance of the Boolean satisfiability problem is solvable is another $\NP$-complete~\cite{Cook.1971} task.
So-called SAT-solvers such as \texttt{MiniSat}~\cite{Een.2003} try to investigate instances of this problem and, if the instance is solvable, try to provide valid variable assignments in reasonable time.
In this work, we recapped our previous approach~\cite{dimdraw} which makes use of this by deducing the largest bipartite induced subgraph of a graph from the assignment computed by a SAT-solver for an instance of the problem.

Furthermore, we proposed three heuristics that compute locally maximal subgraphs: a greedy algorithm, a simulated annealing algorithm and a genetic algorithm.
Simulated annealing and genetic algorithms are well-established tools in algorithm engineering.
We refer the reader to surveys~\cite{Suman.2006,Katoch.2021} to get an overview on the variations of both approaches.

\section{Conclusion}

In this work we proposed three algorithms to compute locally maximal induced bipartite subgraphs of large order. To this end, we employed an exact solution using a SAT-solver and three different heuristic approaches with a greedy strategy, a simulated annealing approach and a genetic algorithm.
Furthermore, we compared the results on four benchmark datasets and demonstrated in an experimental evaluation that all three heuristics have a reason to exist by balancing the time-consumption and the order of the computed subgraph to different degrees.

A notable observation is, that the simulated annealing approach and the genetic heuristics could, by their design, work on any hereditary graph property.
For this, only the greedy heuristic and the neighbor-choosing have to be adjusted.
This raises the question, whether the approach generalizes to other properties of graphs that are hereditary.
To confirm that these approaches work on other classes, further studies are necessary and are a natural future extension to this work.
As we published our code, such experiments can easily be conducted.

\bibliographystyle{template/splncs04}
\bibliography{paper}

\end{document}